\newtheorem{theorem}{Theorem}
\title{
  FedDCL: a federated data collaboration learning as a hybrid-type privacy-preserving framework based on federated learning and data collaboration
}
\author[1,*]{Akira Imakura}
\author[1]{Tetsuya Sakurai}
\affil[1]{University of Tsukuba, 1-1-1 Tennodai, Ibaraki, Tsukuba 305-8573, Japan}
\email{imakura@cs.tsukuba.ac.jp}
\begin{document}
\maketitle

\begin{abstract}
Recently, federated learning has attracted much attention as a privacy-preserving integrated analysis that enables integrated analysis of data held by multiple institutions without sharing raw data.
On the other hand, federated learning requires iterative communication across institutions and has a big challenge for implementation in situations where continuous communication with the outside world is extremely difficult.
In this study, we propose a federated data collaboration learning (FedDCL), which solves such communication issues by combining federated learning with recently proposed non-model share-type federated learning named as data collaboration analysis.
In the proposed FedDCL framework, each user institution independently constructs dimensionality-reduced intermediate representations and shares them with neighboring institutions on intra-group DC servers.
On each intra-group DC server, intermediate representations are transformed to incorporable forms called collaboration representations.
Federated learning is then conducted between intra-group DC servers.
The proposed FedDCL framework does not require iterative communication by user institutions and can be implemented in situations where continuous communication with the outside world is extremely difficult.
The experimental results show that the performance of the proposed FedDCL is comparable to that of existing federated learning.
\end{abstract}

\section{Introduction}
\subsection{Background}
There is a growing demand for integrated analysis of medical data owned by multiple institutions or countries \cite{nepogodiev2020mortality,brat2020international,mariani2016ckd}.
However, sharing the original medical data is difficult because of privacy concerns, and even if it were possible, we would have to pay huge costs.
Therefore, methods to achieve privacy-preserving analysis in which datasets are collaboratively analyzed without sharing the original data are attracting attention.
\par
A typical technology for this topic is a federated learning system \cite{konevcny2016federated,mcmahan2016communication},
Federated learning iteratively updates the integrated model by aggregating information calculated independently on each user institution.
Federated learning enables construction of the integrated model without sharing raw data through iterative model updates that share machine learning models.
On the other hand, federated learning requires iterative communication across institutions and has a big challenge for implementation in situations where continuous communication with the outside world is extremely difficult.
For example, medical data may be stored on a server isolated from external networks, making continuous communication with the outside extremely difficult.
\par
A motivating example of this paper is privacy-preserving medical data analysis in situation where continuous communication with the outside world is extremely difficult for user institutions (Figure~\ref{fig:motivation}).
If the analysis is conducted using only data from a single municipality, the accuracy may be insufficient because of the small sample size, specifically for rare diseases \cite{mascalzoni2014rare}.
Therefore, we consider integrated analysis of medical data owned by multiple institutions or countries.
\par
In this situation, raw data are held in a distributed manner by multiple medical institutions, which also organize into multiple groups based on affiliated hospitals and the municipality or country to which they belong.
Raw data is stored on data servers at each medical institution, and the institutions' servers are not in continuous communication with the outside world.
%On the other hand, we can built an intra-group server in each institution group that cannot store raw data, but can communicate continuously with the outside world.
On the other hand, each institution group would have a server that cannot store the raw data, but can communicate continuously with the outside world.
Each institution's server can perform temporarily secure communication (e.g., data transfer via external storage devices) with the intra-group server.
\begin{figure*}[!t]
\centering
\includegraphics[scale=0.3, bb = 0 50 1134 567]{./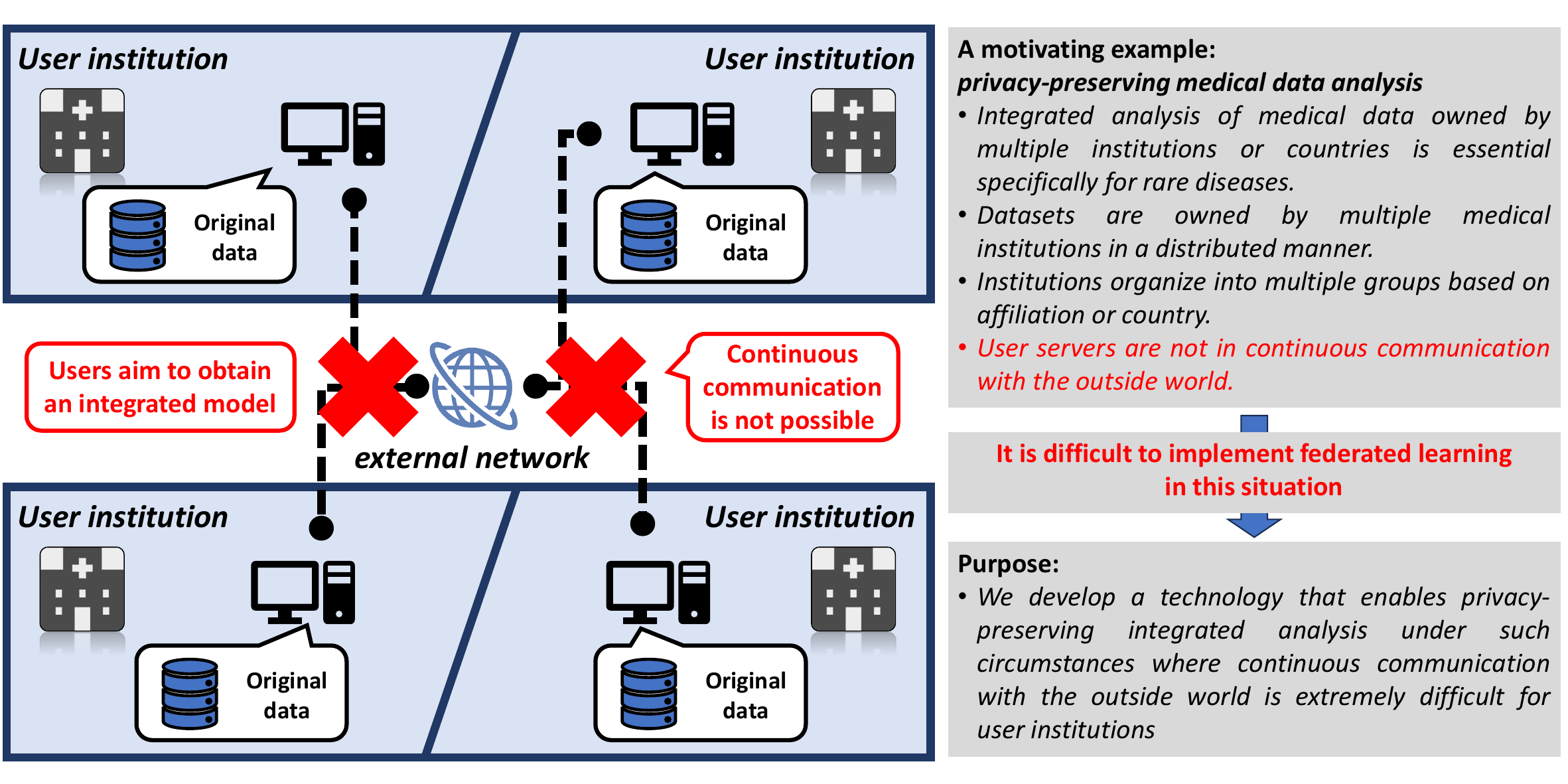}
\caption{
  A motivating example: privacy-preserving medical data analysis in situation where continuous communication with the outside world is extremely difficult for user institutions.
}
\label{fig:motivation}
\end{figure*}
\par
Therefore, developments of technologies for privacy-preserving integrated analysis without any iterative communications by user institutions are essential.
\subsection{Purpose and contributions}
\begin{figure*}[!t]
\centering
\includegraphics[scale=0.3, bb = 0 50 1134 567]{./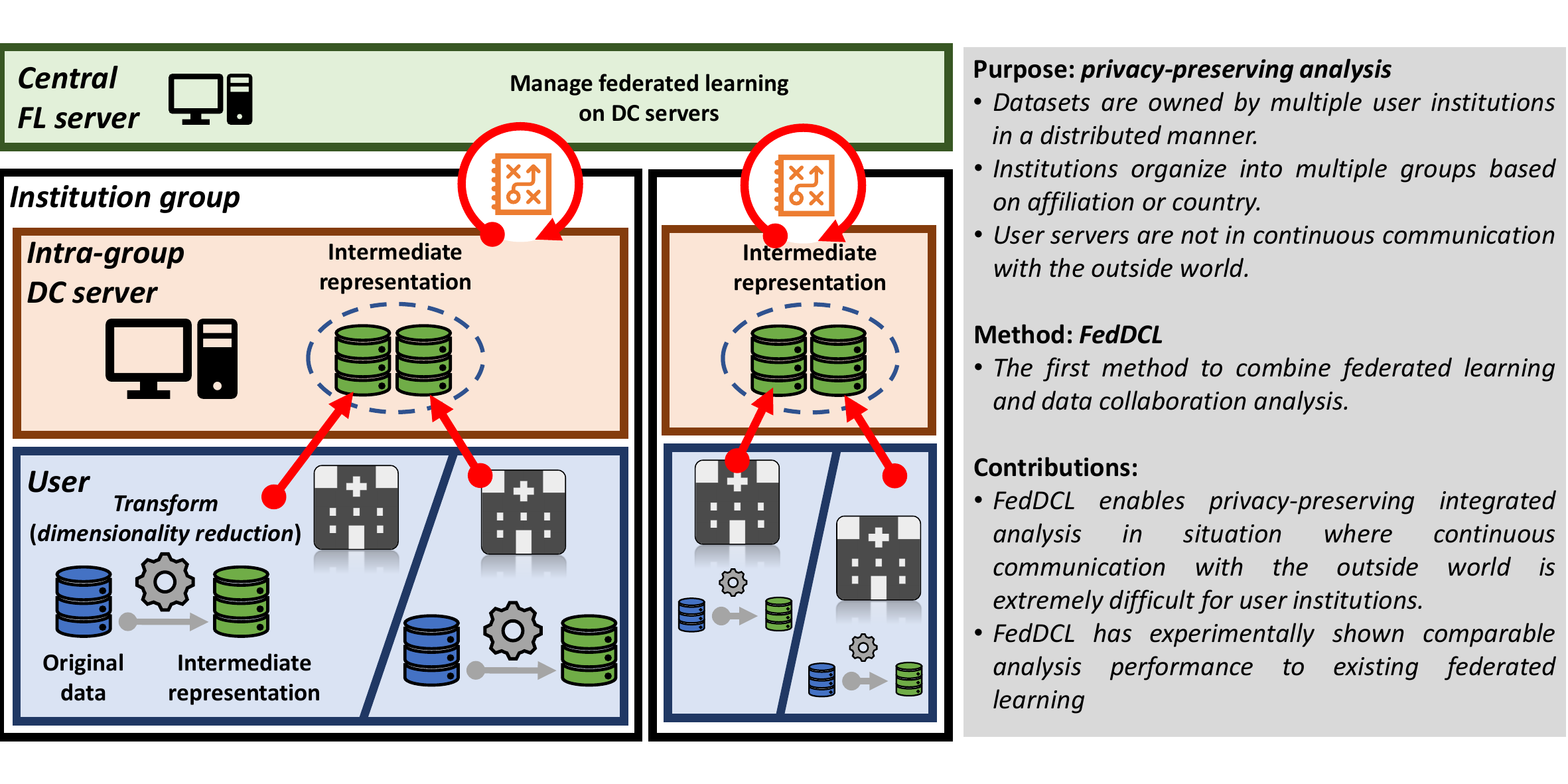}
\caption{
  Concept of the proposed federated data collaboration learning (FedDCL).
}
\label{fig:abstract}
\end{figure*}
The purpose of this paper is to develop a technology that enables privacy-preserving integrated analysis under such circumstances where continuous communication with the outside world is extremely difficult for user institutions.
To tackle such communication issues, we focus on the data collaboration analysis which is a recently proposed non-model share-type federated learning \cite{imakura2020data,imakura2021collaborative,imakura2023non}.
The data collaboration analysis enables integrated analysis without iterative communication across user institutions by not sharing models but dimensionality-reduced intermediate representations.
\par
In this study, we propose a federated data collaboration learning (FedDCL), which solves such communication issues by combining federated learning with the data collaboration analysis; see Figure~\ref{fig:abstract}.
In the proposed FedDCL framework, each user institution independently constructs dimensionality-reduced intermediate representations and shares them with neighboring institutions on intra-group DC servers.
On each intra-group DC server, intermediate representations are transformed to incorporable forms called collaboration representations.
Federated learning is then conducted between intra-group DC servers.
The integrated machine learning model is generated in each user institution combined with mapping functions for constructing intermediate and collaboration representations and federated learning model.
\par
The main contributions of this paper are summarized as
\begin{itemize}
    \item We propose a FedDCL framework that enables privacy-preserving integrated analysis in situation where continuous communication with the outside world is extremely difficult for user institutions.
    \item FedDCL is the first method to combine federated learning and data collaboration analysis.
    \item FedDCL has experimentally shown to have comparable analysis performance to existing federated learning.
    \item FedDCL is a framework that can be easily combined with the latest federated learning mechanisms and deployed on a variety of data and tasks.
\end{itemize}
\section{Related works}
\subsection{Federated learning}
Recently, federated learning systems have been developed for privacy-preserving analysis \cite{konevcny2016federated,mcmahan2016communication}.
Federated learning is typically based on (deep) neural network and updates the model iteratively with iterative communication between user institutions and a central server \cite{li2019survey,konevcny2016federated,mcmahan2016communication,yang2019federated}.
\par
To update the model, federated stochastic gradient descent (FedSGD) and federated averaging (FedAvg) are typical strategies \cite{mcmahan2016communication}. 
FedSGD is a direct extension of the stochastic gradient descent method.
In each iteration of the gradient descent method, each user locally computes a gradient from the shared model using the local dataset and sends the gradient to the server.
The shared gradients are averaged and used to update the model.
\par
Instead of sharing the gradient, we can share model parameters.
This is called FedAvg.
In FedAvg, each user updates the model using the local dataset and sends the updated model to the central server.
Then, the shared models are averaged to update.
FedAvg can reduce the communication frequency than FedSGD.
\par
Federated learning including more recent methods, such as FedProx \cite{li2020federated}, FedCodl \cite{ni2022federated}, FedGroup \cite{duan2021fedgroup}, and FedGK \cite{zhang2024fedgk} and so on, require cross-institutional communication in each iteration called communication round.
Therefore, federated learning has a big challenge to apply in situation where continuous communication with the outside world is extremely difficult for user institutions.
%For more details, we refer to \cite{li2019survey,yang2019federated} and references therein.
%
\subsection{Data collaboration analysis}
As another approach for privacy-preserving analysis, non-model share-type federated learning called data collaboration analysis has been developed \cite{imakura2020data,imakura2021collaborative,imakura2023non}.
\par
Data collaboration analysis centralizes the dimensionality-reduced intermediate representation to a central server instead of sharing the model.
The intermediate representations are transformed to incorporable forms called collaboration representations.
For constructing the incorporable collaboration representations, all user institutions have a shareable pseudo dataset called anchor dataset and centralize its intermediate representation.
Then, the collaboration representation is analyzed as a single dataset on the central server without communication.
\par
Data collaboration analysis preserves the privacy of the original data by allowing each user to use individual functions to generate the intermediate representation and not share them \cite{imakura2021accuracy}.
The data collaboration analysis does not require iterative communications between user institutions.
\par
Data collaboration analysis has been extended to novelty detection \cite{imakura2021collaborative2}, feature selection \cite{ye2019distributed}, interpretable model construction \cite{imakura2021interpretable}, survival analysis \cite{imakura2023dc-cox}, causal inference \cite{kawamata2024collaborative}, and so on.
\section{Proposal for FedDCL}
This paper targets classification and regression problems on structured data.
That is, for training dataset $X = [{\bm x}_1, {\bm x}_2, \dots, {\bm x}_n]^\top \in \mathbb{R}^{n \times m}$ and $Y = [{\bm y}_1, {\bm y}_2, \dots,$ $ {\bm y}_n]^\top \in \mathbb{R}^{n \times \ell}$, we aim to generate a machine learning model $t$ such that
\begin{equation*}
    t(X) \approx Y.
\end{equation*}
\par
Here, we consider the situation where these $n$ samples of data are held by multiple user institutions in a distributed manner, and where the user institutions are divided into multiple groups.
Let $d$ $(\geq 2)$ be the number of groups and $c_i$ $(\geq 1$) $(i = 1,2,\dots,d)$ be the number of institutions in the $i$-th group.
Here, the total number of institutions is $c = \sum_{i=1}^d c_i$.
Then, the dataset $X$ and $Y$ are distributed into $c$ user institutions as
\begin{equation*}
  X
  = \left[
    \begin{array}{c}
      X^{(1)} \\
      X^{(2)} \\
      \vdots \\
      X^{(d)}
    \end{array}
  \right], \quad
  X^{(i)}
  = \left[
    \begin{array}{c}
      X^{(i)}_1 \\
      X^{(i)}_2 \\
      \vdots \\
      X^{(i)}_{c_i}
    \end{array}
  \right], \quad
  Y
  = \left[
    \begin{array}{c}
      Y^{(1)} \\
      Y^{(2)} \\
      \vdots \\
      Y^{(d)}
    \end{array}
  \right], \quad
  Y^{(i)}
  = \left[
    \begin{array}{c}
      Y^{(i)}_1 \\
      Y^{(i)}_2 \\
      \vdots \\
      Y^{(i)}_{c_i}
    \end{array}
  \right],
\end{equation*}
where $X^{(i)}_j \in \mathbb{R}^{n_{ij} \times m}, Y^{(i)}_j \in \mathbb{R}^{n_{ij} \times \ell}$, and $n = \sum_{i,j} n_{ij}$.
Here, each $(i,j)$-th user institution has a partial dataset $X^{(i)}_j$ and $Y^{(i)}_j$.
\par
All user institutions do not want to share the original data $X^{(i)}_j$, but aim to obtain the model function $t$ trained on dataset from all user institutions.
\subsection{Basic concept}
The basic concept of the proposed FedDCL is shown as follows.
\begin{itemize}
    \item Based on the data collaboration framework, each user institution independently constructs dimensionality-reduced intermediate representations and shares them within each group on an {\it intra-group DC server}.
    At this time, data privacy is ensured by not sharing the mapping function to the intermediate representation.
    \item To enhance data privacy, any data uploaded to the intra-group DC servers from each user is not directly disclosed outside the group.
    \item An integrated analysis model is constructed based on federated learning framework on {\it intra-group DC servers} with a {\it central FL server}.
\end{itemize}
\subsection{Derivation}
Based on the data collaboration and federated learning frameworks, the proposed FedDCL operates by three roles: {\it users}, {\it intra-group DC servers}, and a {\it central FL server}.
FedDCL consists of the following five steps:
Step~1. Construction of shareable pseudo anchor dataset;
Step~2. Construction of intermediate representation;
Step~3. Construction of collaboration representation;
Step~4. Construction of integrated model for collaboration representation;
Step~5. Construction of integrated model for raw dataset.
%
% \begin{enumerate}[Step 1:]
%     \item Construction of shareable pseudo anchor dataset.
%     \item Construction of intermediate representation.
%     \item Construction of collaboration representation.
%     \item Construction of integrated model for collaboration representation.
%     \item Construction of integrated model for raw dataset.
% \end{enumerate}
%
\subsubsection*{Step 1: Construction of shareable pseudo anchor dataset}
All users generate the same anchor dataset $A \in \mathbb{R}^{r \times m}$, which is shareable pseudo data consisting of public or dummy data randomly constructed, where $r$ is the number of anchor data samples.
\par
Anchor dataset $A$ is generated by, e.g., uniform random numbers with value ranges for each feature aligned with the raw data.
On the other hand, it is also known that having a data distribution close to that of the raw data improves recognition performance, and a low-rank approximation-based method \cite{imakura2021interpretable} and synthetic minority oversampling technique (SMOTE)-based method \cite{imakura2023another} have been proposed.
\subsubsection*{Step 2: Construction of intermediate representation}
Using a linear or nonlinear row-wise mapping function $f^{(i)}_j$, each $(i,j)$-th user constructs dimensionality-reduced intermediate representations,
\begin{align*}
  \widetilde{X}^{(i)}_j = f^{(i)}_j(X^{(i)}_j) \in \mathbb{R}^{n_{ij} \times \widetilde{m}_{ij}}, \quad
  \widetilde{A}^{(i)}_j = f^{(i)}_j(A) \in \mathbb{R}^{r \times \widetilde{m}_{ij}},
\end{align*}
and centralizes them to the corresponding intra-group DC server.
A typical setting for $f^{(i)}_j$ is dimensionality reduction with $\widetilde{m}_{ij} < m$, including unsupervised \cite{pearson1901liii,maaten2008visualizing} and supervised methods \cite{fisher1936use,sugiyama2007dimensionality,sugiyama2008direct,li2017locality,imakura2019complex}.
\subsubsection*{Step 3: Construction of collaboration representation}
The intermediate representations on the intra-group DC servers cannot be analyzed as one dataset even using federated learning, because $f^{(i)}_j$ depends on users $i$ and $j$.
Therefore, we transform the intermediate representations to incorporable collaboration representation.
\par
If we use a linear transformation, the collaboration representation should be set such that 
\begin{equation*}
  \widetilde{A}^{(i)}_jG^{(i)}_j \approx \widetilde{A}^{(i')}_{j'}G^{(i')}_{j'},
\end{equation*}
where $G^{(i)}_j \in \mathbb{R}^{\widetilde{m}_{ij} \times \widehat{m}}$.
Now, we consider setting the matrix $G^{(i)}_j$ using the following minimal perturbation problem:
\begin{equation*}
  \min_{E^{(i)}_j, G'^{(i)}_j, \|Z'\|_{\rm F} = 1} \sum_{i=1}^{d} \sum_{j=1}^{c_i} \| E^{(i)}_j \|_{\rm F}^2 \quad
  \mbox{s.t. } (\widetilde{A}^{(i)}_j + E^{(i)}_j) G'^{(i)}_j = Z'.
\end{equation*}
This can be solved by a singular value decomposition (SVD) based algorithm for total least squares problems.
Let 
\begin{align*}
   \widetilde{A} 
   &= [\widetilde{A}^{(1)}_1, \dots, \widetilde{A}^{(1)}_{c_1}, \widetilde{A}^{(2)}_{1}, \dots, \widetilde{A}^{(2)}_{c_2}, \dots, \widetilde{A}^{(d)}_{1}, \dots, \widetilde{A}^{(d)}_{c_d}]  \\
   &=  [U, U'] \left[
    \begin{array}{cc}
      \Sigma \\
      & \Sigma'
    \end{array}
   \right]
   \left[
    \begin{array}{c}
      V^\top \\
      V'^\top
    \end{array}
   \right] \approx  U \Sigma V^{\rm T}
\end{align*}
be the rank $\widehat{m}$ approximation based on SVD.
Then, the target matrix $G^{(i)}_j$ is obtained as 
\begin{equation*}
%  G^{(i)}_j = (\widetilde{A}^{(i)}_j)^\dagger U_{\widehat{m}} C,
  G^{(i)}_j = \arg \min_{G \in \mathbb{R}^{\widetilde{m}_{ij} \times \widehat{m}}} \| \widetilde{A}^{(i)}_jG - Z \|_{\rm F}, \quad Z = UC,
\end{equation*}
%
%where $\dagger$ denotes the Moore--Penrose inverse and $C \in \mathbb{R}^{\widehat{m}_i \times \widehat{m}_i}$ is a nonsingular matrix (e.g., $C=I$ and $C=\Sigma_{\widehat{m}_i}$).
%where $C \in \mathbb{R}^{\widehat{m} \times \widehat{m}}$ is a nonsingular matrix (e.g., $C=I$, $C=\Sigma_{\widehat{m}}$, and $C = \Sigma_{\widehat{m}} V_{\widehat{m}}(1:\widehat{m},:)$).
where $C \in \mathbb{R}^{\widehat{m} \times \widehat{m}}$ is a nonsingular matrix.
\par
However, to construct $\widetilde{A}$, we need to share $\widetilde{A}^{(i)}_{j}$ to, e.g., the central FL server.
Sharing $\widetilde{A}^{(i)}_j$ across groups leads to an increased risk of privacy leakage and is contrary to the concept of the proposed method.
Instead, we consider computing a low-rank approximation 
\begin{align}
   \widetilde{A}^{(i)} 
   &= [\widetilde{A}^{(i)}_1, \widetilde{A}^{(i)}_2, \dots, \widetilde{A}^{(i)}_{c_i}]  \nonumber \\
   &= [U^{(i)}, U'^{(i)}] \left[
    \begin{array}{cc}
      \Sigma^{(i)} \\
      & \Sigma'^{(i)}
    \end{array}
   \right]
   \left[
    \begin{array}{c}
      (V^{(i)})^\top \\
      (V'^{(i)})^\top
    \end{array}
   \right] \approx  U^{(i)} \Sigma^{(i)} (V^{(i)})^\top
   \label{eq:svd1}
\end{align}
in intra-group DC servers and setting
\begin{equation*}
  \widetilde{B}^{(i)} = U^{(i)} C_1^{(i)},
\end{equation*}
where $C_1^{(i)} \in \mathbb{R}^{\widehat{m}_i \times \widehat{m}_i}$ is a nonsingular matrix.
%where $C_1^{(i)} \in \mathbb{R}^{\widehat{m}_i \times \widehat{m}_i}$ is a nonsingular matrix (e.g., $C_1^{(i)}=I$, $C_1^{(i)}=\Sigma^{(i)}_{\widehat{m}_i}$, and $C_1^{(i)} = \Sigma_{\widehat{m}} (V^{(i)}_{\widehat{m}}(1:\widehat{m},:))^\top$).
Here, we have
\begin{equation*}
  \widetilde{A}^{(i)} \approx \widetilde{B}^{(i)} W^{(i)}, \quad
  W^{(i)} = (C_1^{(i)})^{-1}\Sigma^{(i)} (V^{(i)})^\top
\end{equation*}
Then, we share $\widetilde{B}^{(i)}$ to the central FL server.
\par
Let 
\begin{align}
   \widetilde{B} = [\widetilde{B}^{(1)}, \widetilde{B}^{(2)}, \dots, \widetilde{B}^{(d)}]
   = [P, P'] \left[
    \begin{array}{cc}
      D \\
      & D'
    \end{array}
   \right]
   \left[
    \begin{array}{c}
      Q^\top \\
      Q'^\top
    \end{array}
   \right] \approx  P D Q^\top
   \label{eq:svd2}
\end{align}
be the rank $\widehat{m}$ low-rank approximation based on SVD.
From the property
\begin{align*}
   \widetilde{A} 
   &= [\widetilde{A}^{(1)}, \widetilde{A}^{(2)}, \dots, \widetilde{A}^{(d)}] \\
   &\approx [\widetilde{B}^{(1)}, \widetilde{B}^{(2)}, \dots, \widetilde{B}^{(d)}]
    {\rm diag}(W^{(1)}, W^{(2)}, \dots, W^{(d)}) \\
   &\approx P D Q^\top
    {\rm diag}(W^{(1)}, W^{(2)}, \dots, W^{(d)}),
\end{align*}
we have $\mathcal{R}(U) \approx \mathcal{R}(P)$.
Using this observation, we set the matrix $G^{(i)}_j$ as
\begin{equation}
  G^{(i)}_j = \arg \min_{G \in \mathbb{R}^{\widetilde{m}_{ij} \times \widehat{m}}} \| \widetilde{A}^{(i)}_jG - Z \|_{\rm F}, \quad Z = PC_2,
  \label{eq:set_g}
\end{equation}
where $C_2 \in \mathbb{R}^{\widehat{m} \times \widehat{m}}$ is a nonsingular matrix.
\par
For example, let we split
\begin{align*}
  &(V^{(i)})^\top = [(V^{(i)}_1)^\top, (V^{(i)}_2)^\top, \dots, (V^{(i)}_{c_i})^\top], \\
  &Q^\top = [(Q^{(1)})^\top, (Q^{(2)})^\top, \dots, (Q^{(d)})^\top].
\end{align*}
Then, using random orthogonal matrix $E_1^{(i)}$ and $E_2$, we set 
\begin{align*}
  C_1^{(i)} = \Sigma (V^{(i)}_{j'_i})^\top E_1^{(i)}, \quad
  C_2 = D (Q^{(i')})^\top E_2,
\end{align*}
for randomly selected $1 \leq i' \leq d$ and $1 \leq j'_i \leq c_i$ in the numerical experiment.
%In this case, $PC_2 \approx \widetilde{A}^{(i')}_{j'_i} E_2 E_1^{(i')}$.
%
\subsubsection*{Step 4: Construction of integrated model for collaboration representation}
The collaboration representations are given as a single dataset, that is,
\begin{equation*}
  \widehat{X}^{(i)}
  = \left[
    \begin{array}{c}
      \widehat{X}^{(i)}_1 \\
      \widehat{X}^{(i)}_2 \\
      \vdots \\
      \widehat{X}^{(i)}_{c_i}
    \end{array}
  \right] 
  = \left[
    \begin{array}{c}
      \widetilde{X}^{(i)}_1 G^{(i)}_1 \\
      \widetilde{X}^{(i)}_2 G^{(i)}_2 \\
      \vdots \\
      \widetilde{X}^{(i)}_{c_i} G^{(i)}_{c_i}
    \end{array}
  \right] 
  \in \mathbb{R}^{n_i \times \widehat{m}}, \quad
  i = 1, 2, \dots, d,
\end{equation*}
where $n_i = \sum_j n_{ij}$, and are on intra-group DC servers instead of user institutions.
%The communication cost of the intra-group DC server with the outside world can be small because the intra-group DC servers do not have raw data, whereas the servers of each user institution are not on continuous communication with the outside world.
Note that the intra-group DC servers, unlike the servers within the user institutions that have raw data, are capable of continuous communication with the outside world.
Therefore, integrated model 
\begin{equation*}
  h(\widehat{X}) \approx Y, \quad
  \widehat{X}
  = \left[
    \begin{array}{c}
      \widehat{X}^{(1)} \\
      \widehat{X}^{(2)} \\
      \vdots \\
      \widehat{X}^{(d)}
    \end{array}
  \right] 
\end{equation*}
can be efficiently constructed by federated learning framework with the central FL server.
Here, we note that the integrated mode $h$ is for the collaboration representations $\widehat{X}$ instead of raw data representation $X$.
\subsubsection*{Step 5: Construction of integrated model for raw data representation}
The integrated model $h$ and the matrix $G^{(i)}_j$ is returned to each $(i,j)$-th user institution from intra-group DC servers.
Then, in each user institution, integrated model for raw data representation is recovered as
\begin{equation*}
  t_j^{(i)} (X) = h(f^{(i)}_j(X)G^{(i)}_j).
\end{equation*}
\subsubsection*{Algorithm of FedDCL}
The algorithm of the proposed FedDCL is summarized in Algorithm~\ref{alg:FedDCL} and Figure~\ref{fig:algorithm}.
In the proposed FedDCL, each user institution requires only two cross-institutional communications, Steps 4 and 15 in Algorithm~\ref{alg:FedDCL}.
\begin{algorithm*}[!t]
\caption{A federated data collaboration learning (FedDCL)}
\label{alg:FedDCL}
\footnotesize
\begin{algorithmic}
  \REQUIRE Training datasets $X^{(i)}_j \in \mathbb{R}^{n_{ij} \times m}, Y^{(i)}_j \in \mathbb{R}^{n_{ij} \times \ell}$ individually
  \ENSURE Integrated model $t^{(i)}_j(X) \approx Y$ for each $i,j$
  \STATE
  \STATE
  \begin{tabular}{rclcll}
    & \multicolumn{3}{c}{ {\it Users} $(i,j)$} & \\ \cmidrule{2-4}
    1:  & \multicolumn{3}{l}{All users generate the same anchor dataset $A \in \mathbb{R}^{r \times m}$} & \\
    2:  & \multicolumn{3}{l}{Generate $f^{(i)}_j$} & \\
    3:  & \multicolumn{3}{l}{Compute $\widetilde{X}^{(i)}_j = f^{(i)}_j(X^{(i)}_j)$ and $\widetilde{A}^{(i)}_j = f^{(i)}_j(A)$} & \\
    4:  & \multicolumn{3}{l}{Share $\widetilde{X}^{(i)}_j, \widetilde{A}^{(i)}_j$, and $Y^{(i)}_j$ to {\it Intra-group DC server}} & \\
    \\
    &   & \multicolumn{3}{c}{ {\it Intra-group DC server} ($i$)}  \\ \cmidrule{3-5}
    5:  & \qquad $\searrow$ & \multicolumn{2}{l}{Obtain $\widetilde{X}^{(i)}_j, \widetilde{A}^{(i)}_j$, and $Y^{(i)}_j$ for all $j$}  \\
    6:  & & \multicolumn{3}{l}{Set $\widetilde{A}^{(i)}$ and compute a rank $\widehat{m}_i$ approximation \eqref{eq:svd1} and get $\widetilde{B}^{(i)} = U^{(i)} C_1^{(i)}$} \\
    7:  & & \multicolumn{3}{l}{Share $\widetilde{B}^{(i)}$ to {\it Central FL server}} \\
    \\
    &   & & \multicolumn{3}{c}{ {\it Central FL server}}  \\ \cmidrule{4-6}
    8:  & & \qquad $\searrow$ & \multicolumn{3}{l}{Obtain $\widetilde{B}^{(i)}$} \\
    9:  & & & \multicolumn{3}{l}{Set $\widetilde{B}$ and compute a rank $\widehat{m}$ approximation \eqref{eq:svd2} and get $Z = P C_2$} \\
    10:  & & \qquad $\swarrow$ & \multicolumn{3}{l}{Return $Z$ to {\it Intra-group DC servers}} \\
    \\
    &   & \multicolumn{3}{c}{ {\it Intra-group DC server} ($i$)}  \\ \cmidrule{3-5}
    11:  & & \multicolumn{3}{l}{Obtain $Z$} \\
    12:  & & \multicolumn{3}{l}{Compute $G^{(i)}_j$ by \eqref{eq:set_g} from $\widetilde{A}^{(i)}_j$ and $Z$ for all $j$} \\
    13:  & & \multicolumn{3}{l}{Compute $\widehat{X}^{(i)}_j = \widetilde{X}^{(i)}_j G^{(i)}_j$ for all $j$, and set $\widehat{X}^{(i)}$} \\
    14:  & & \multicolumn{3}{l}{Run federated learning with {\it Central FL server} to obtain $h(\widehat{X})$} \\
    15: & \qquad $\swarrow$ & \multicolumn{3}{l}{Return $G^{(i)}_j$ and $h(\widehat{X})$ to each user} \\
    \\
    & \multicolumn{3}{c}{ {\it User} $(i,j)$} & \\ \cmidrule{2-4}
    16: & \multicolumn{3}{l}{Obtain $G^{(i)}_j$ and $h(\widehat{X})$} \\
    17: & \multicolumn{3}{l}{Set $t_j^{(i)}(X) = h(f^{(i)}_j(X)G^{(i)}_j)$} \\
  \end{tabular}
\end{algorithmic}
\end{algorithm*}
\begin{figure*}[!t]
\centering
\includegraphics[scale=0.3, bb = 0 0 1134 567]{./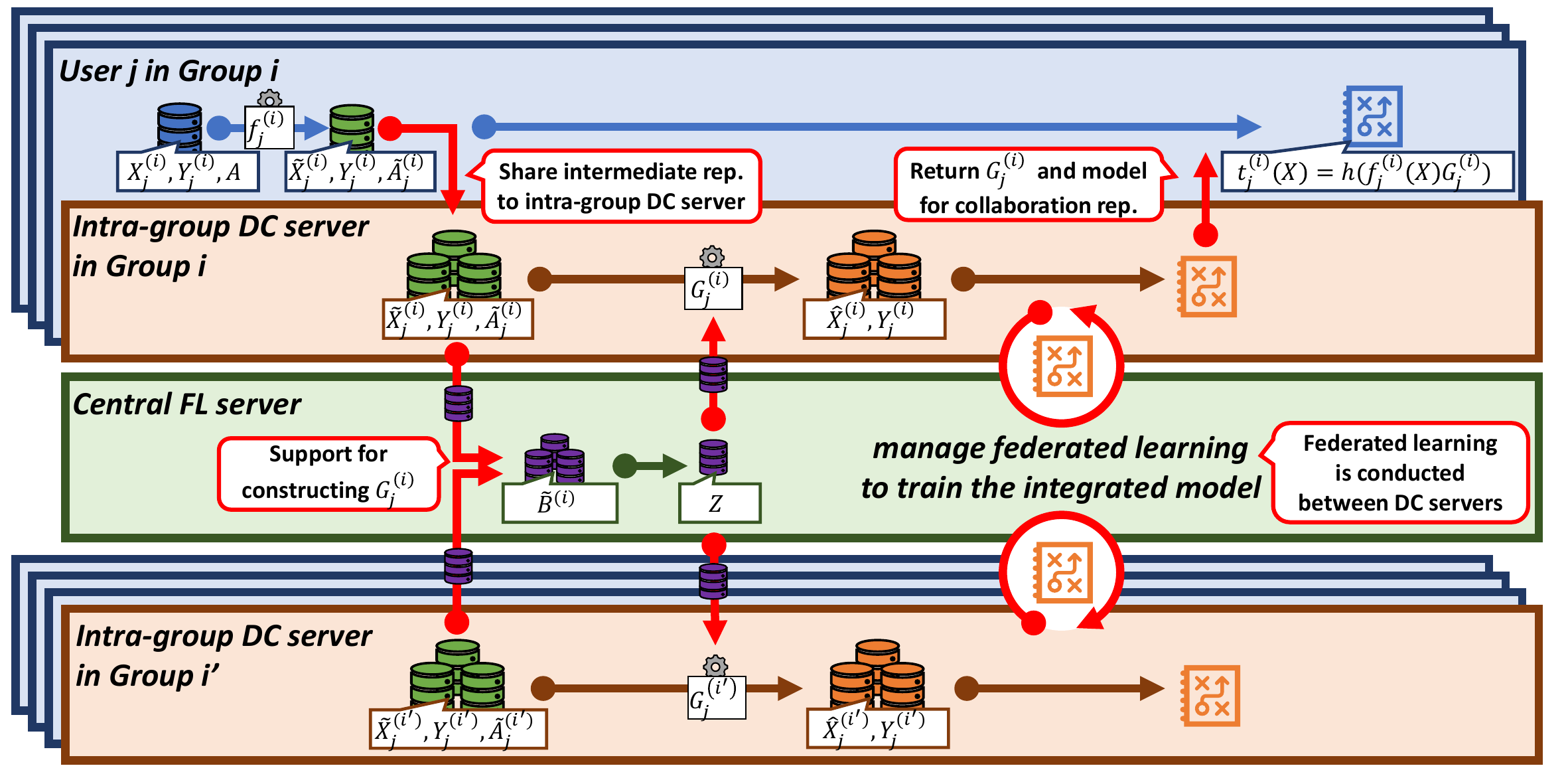}
\caption{
  Outline of the proposed FedDCL method.
}
\label{fig:algorithm}
\end{figure*}
\subsection{Discussion on correctness}
For a correctness of the proposed FedDCL, we have the following theorem.
\begin{theorem}
If the mapping functions $f^{(i)}_j$ are linear, that is, $f^{(i)}_j(X^{(i)}_j) = X^{(i)}_jF^{(i)}_j$ with $F^{(i)}_j \in \mathbb{R}^{m \times \widetilde{m}}$ and the matrices $F^{(i)}_j$ have the same range 
\begin{equation}
  \mathcal{F} = \mathcal{R}(F^{(1)}_1) = \dots = \mathcal{R}(F^{(c)}_{d_c}), \quad {\rm rank}(AF^{(i)}_j) = \widetilde{m}.
  \label{eq:cond}
\end{equation}
%
%the collaboration representations $\widehat{X}$ of the FedDCL is mathematically equivalent to the centralized dataset with the same dimensionality reduction $F^{(1)}_1 G^{(1)}_1$.
Then, for the collaboration representations $\widehat{X}$ of the FedDCL, there exist the dimensionality reduction $F$ such that
\begin{equation*}
  \widehat{X} = XF, \quad \mathcal{R}(F) = \mathcal{F}.
\end{equation*}
\end{theorem}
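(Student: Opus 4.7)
The plan is to exhibit a single matrix $F \in \mathbb{R}^{m \times \widehat{m}}$, independent of the pair $(i,j)$, that satisfies $F^{(i)}_j G^{(i)}_j = F$ for every user. Once this is done, stacking $\widehat{X}^{(i)}_j = X^{(i)}_j F^{(i)}_j G^{(i)}_j = X^{(i)}_j F$ over all $(i,j)$ immediately yields $\widehat{X} = X F$, and the range identification $\mathcal{R}(F) = \mathcal{F}$ will follow from an elementary rank count. The first step is to exploit the common-range hypothesis \eqref{eq:cond}: fix any basis matrix $F_0 \in \mathbb{R}^{m \times \widetilde{m}}$ of $\mathcal{F}$ and write $F^{(i)}_j = F_0 T^{(i)}_j$ for invertible $T^{(i)}_j \in \mathbb{R}^{\widetilde{m} \times \widetilde{m}}$; under \eqref{eq:cond} the matrix $AF_0$ has full column rank $\widetilde{m}$, and every anchor image $\widetilde{A}^{(i)}_j = (AF_0)\, T^{(i)}_j$ shares the same column space $A\mathcal{F}$.

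I would then track this column space through the two truncated SVDs in the algorithm. Each intra-group concatenation $\widetilde{A}^{(i)}$ in \eqref{eq:svd1} has column space $A\mathcal{F}$ and rank $\widetilde{m}$, so for any $\widehat{m}_i \geq \widetilde{m}$ no nonzero singular value is truncated and $\mathcal{R}(U^{(i)}) = A\mathcal{F}$; nonsingularity of $C_1^{(i)}$ then forces $\mathcal{R}(\widetilde{B}^{(i)}) = A\mathcal{F}$. The horizontal concatenation $\widetilde{B}$ inherits the same column space, and the rank-$\widehat{m}$ SVD in \eqref{eq:svd2} likewise gives $\mathcal{R}(P) = A\mathcal{F}$. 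Via the nonsingular $C_2$, this transfers to $\mathcal{R}(Z) = A\mathcal{F}$.

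Now comes the key observation: since $\mathcal{R}(Z) \subseteq \mathcal{R}(\widetilde{A}^{(i)}_j)$ and $\widetilde{A}^{(i)}_j$ has full column rank, the least-squares problem \eqref{eq:set_g} is solvable with zero residual and has the unique minimizer $G^{(i)}_j = (\widetilde{A}^{(i)}_j)^{\dagger} Z$, so in particular $A\, F^{(i)}_j G^{(i)}_j = Z$. Because $F^{(i)}_j = F_0 T^{(i)}_j$, the product $F^{(i)}_j G^{(i)}_j$ factors as $F_0 S^{(i)}_j$ for some $S^{(i)}_j$, and the equation reduces to $(AF_0)\, S^{(i)}_j = Z$. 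Full column rank of $AF_0$ gives the unique solution $S^{(i)}_j = (AF_0)^{\dagger} Z$, which is visibly independent of $(i,j)$. Setting $F := F_0 (AF_0)^{\dagger} Z$ therefore yields $F^{(i)}_j G^{(i)}_j = F$ for every user, so $\widehat{X} = X F$ as claimed.

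For the range identification, $\mathcal{R}(F) \subseteq \mathcal{R}(F_0) = \mathcal{F}$ is immediate from the factorization. The reverse inclusion is a rank count: $Z$ has $\widetilde{m}$-dimensional column space $A\mathcal{F}$; the pseudoinverse $(AF_0)^{\dagger}$ is injective on $A\mathcal{F}$, so $(AF_0)^{\dagger} Z$ has rank $\widetilde{m}$; and left-multiplication by the full-column-rank $F_0$ preserves rank, giving $\mathrm{rank}(F) = \widetilde{m} = \dim \mathcal{F}$. The step I expect to need the most care on is the second paragraph's two-stage SVD argument: one must note explicitly that $\widehat{m}_i$ and $\widehat{m}$ are at least $\widetilde{m}$ (otherwise truncation would strictly shrink the column space and destroy the exact identifications $\mathcal{R}(U^{(i)}) = \mathcal{R}(P) = A\mathcal{F}$), so that the chain of column-space equalities can be maintained through both low-rank approximations in \eqref{eq:svd1} and \eqref{eq:svd2}.
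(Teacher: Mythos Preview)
Your proposal is correct and follows essentially the same route as the paper: use the common-range hypothesis to factor each $F^{(i)}_j$ through a fixed matrix, observe that no information is lost in the two SVD truncations so that the least-squares problems \eqref{eq:set_g} attain zero residual, and conclude that $F^{(i)}_jG^{(i)}_j$ is independent of $(i,j)$. Your version is in fact more complete than the paper's: you give an explicit formula $F = F_0(AF_0)^{\dagger}Z$ and an honest rank argument for $\mathcal{R}(F)=\mathcal{F}$ (which the paper asserts but does not prove), and you correctly flag the implicit dimension constraint $\widehat{m}_i,\widehat{m}\geq\widetilde{m}$ needed for the column-space chain to survive both truncations.
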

\begin{proof}
  If $F^{(i)}_j$ have the same range \eqref{eq:cond}, then $F^{(i)}_j = F^{(1)}_1 E^{(i)}_j$ with $E^{(i)}_j \in \mathbb{R}^{\widetilde{m} \times \widetilde{m}}$.
  Therefore, we also have $\widetilde{A}^{(i)}_j = AF^{(1)}_1 E^{(i)}_j$.
  In this case, since $\Sigma'^{(i)} = O$ in \eqref{eq:svd1} and $D' = O$ in \eqref{eq:svd2},
  \begin{equation*}
    \widetilde{A} 
    = P D Q^\top
    {\rm diag}(W^{(1)}, W^{(2)}, \dots, W^{(d)}).
  \end{equation*}
  This leads to
  \begin{equation*}
  %  G^{(i)}_j = (\widetilde{A}^{(i)}_j)^\dagger B.
    \min_{G \in \mathbb{R}^{\widetilde{m}_{ij} \times \widehat{m}}} \| \widetilde{A}^{(i)}_jG - P C_2 \|_{\rm F} = 0
  \end{equation*}
  for all $i$ and $j$.
  From $\widetilde{A}^{(i)}_j = A F_j^{(i)}$, we have $F_j^{(i)} G_j^{(i)} = F_{j'}^{(i')} G_{j'}^{(i')}$ and define $F = F_j^{(i)} G_j^{(i)}$.
  Therefore, we have
  \begin{equation*}
      \widehat{X}^{(i)}_j = X_j^{(i)} F_j^{(i)} G_j^{(i)} = X_j^{(i)} F
  \end{equation*}
  for all $i$ and $j$ that proves Theorem~1.
\end{proof}
\par
Theorem~1 implies that, under the conditions \eqref{eq:cond}, FedDCL is equivalent to federated learning for dimensionality-reduced data constructed by the same mapping function.
\subsection{Discussion on privacy}
Here, we discuss data privacy of FedDCL with respect to information leakage from data held by the intra-group DC server and the central FL server.
\par
First, we consider information leakage from data held by the intra-group DC server.
Each intra-group DC server hold intermediate and collaboration representations of user institutions in the group.
Here, the private data $X^{(i)}_j$ is protected by the following double privacy layer:
\begin{enumerate}[{Layer} 1]
  \item No one can possess private data $X^{(i)}_j$ because $f^{(i)}_j$ is private under the protocol;
  \item Even if $f^{(i)}_j$ is stolen, the private data $X^{(i)}_j$ is still protected regarding $\varepsilon$-DR privacy \cite{nguyen2020autogan} because $f^{(i)}_j$ is a dimensionality reduction function,
\end{enumerate}
as a manner identical to that of the conventional data collaboration analysis shown in \cite{imakura2021accuracy}.
While conventional data collaboration analysis centralizes intermediate representations on a single central server, FedDCL centralizes them to the intra-group DC server in each group.
In this sense, FedDCL reduces the risk of a single point of failure compared to conventional data collaboration analysis.
\par
Second, we consider information leakage from data held by the central FL server.
The information held by the central FL server is basically equal to that for conventional federated learning.
As well as conventional federated learning, there is a possibility of information leakage from, e.g., gradient information.
However, while the information leakage risk of conventional federated learning is for raw data, the information leakage risk of FedDCL is for collaboration representations.
In this sense, FedDCL reduces the risk of information leakage for the raw data compared to conventional federated learning.
\section{Numerical evaluations}
\subsection{Experimental conditions}
This section provides a comparison of the proposed {\bf FedDCL} (Algorithm~\ref{alg:FedDCL}) with the centralized analysis ({\bf Centralized}), which shares the original dataset, the local analysis ({\bf Local}), which uses only one local dataset, federated learning ({\bf FedAvg}), and data collaboration analysis ({\bf DC}).
Note that the proposed FedDCL is a framework that can be easily combined with the latest federated learning mechanisms.
Therefore, in this paper, we just evaluate the performance with a simple {\bf FedAvg}.
\par
For the machine learning model, we use fully connected neural network.
For all methods, we set batch size as 32.
For {\bf Centralized}, {\bf Local}, and {\bf DC}, the number of epoch is set as 40.
For {\bf FedAvg} and {\bf FedDCL}, the number of epochs in each round is set as 4 and the number of rounds is set as 20 (total number of epochs is 80).
This is based on the fact that the convergence of {\bf FedAvg} is generally lower than {\bf Centralized}.
For {\bf DC} and {\bf FedDCL}, we used PCA with random orthogonal mapping for constructing intermediate representations.
Anchor dataset $A$ was constructed as a random matrix in the range of the corresponding feature, as in \cite{imakura2020data,imakura2021collaborative}.
We set $\widehat{m} = \widetilde{m}_{ij}$ as dimensionality of collaboration representations and $r = 2000$ as the number of anchor data.
\par
All random values were generated by Mersenne Twister.
All the numerical experiments were conducted on Windows 11, 13th Gen Intel(R) Core(TM) i7-1370P @ 1.90 GHz, 64GB RAM using MATLAB2024a.
\subsection{Experiment I: Proof-of-concept}
\par
As a proof-of-concept, we evaluate the efficiency of the proposed FedDCL on {\sf BatterySmall} dataset obtained from the MATALB Statistics and Machine Learning Toolbox.
{\sf BatterySmall} is a dataset of lithium-ion battery sensor data (voltage (V), electric current (I), temperature (Temp), average voltage (V\_avg), average electric current (I\_avg)) and data on the battery's state of charge (SOC).
This is a subset of the data in \cite{kollmeyer2020lg}.
We set the dataset up as regression problem.
\par
We consider the situation where the dataset is held in four user institutions which also organize into two groups, that is, $c_i = d = 2$.
Each user institution has 100 samples, that is, $n_{ij} = 100$.
A part of raw data of users are shown in Table~\ref{table:raw_data}.
We set dimensionality of intermediate representations as $\widehat{m} = \widetilde{m}_{ij} = 4$.
We also set the layers of neural network as [5--20--1] for {\bf Centralized}, {\bf Local}, and {\bf FedAvg} and [4--20--1] for {\bf DC} and {\bf FedDCL}.
Note that, for {\bf DC} and {\bf FedDCL}, neural network is applied to the collaboration representation $\widehat{X}$ with dimensionality $\widehat{m} = 4$.
We set the number of test samples is $1000$.
\begin{table*}[!t]
\centering
\caption{A part of raw dataset held by users.}
\label{table:raw_data}
\begin{tabular}{rrrrrrrrrrr}
\toprule
\multicolumn{5}{c}{User (1,1) $X^{(1)}_1$ } & & \multicolumn{5}{c}{User (1,2) ${X}^{(1)}_2$} \\ \cmidrule{1-5} \cmidrule{7-11}
\multicolumn{1}{c}{V} & \multicolumn{1}{c}{I} & \multicolumn{1}{c}{Temp} & \multicolumn{1}{c}{V\_avg} & \multicolumn{1}{c}{I\_avg} & & \multicolumn{1}{c}{V} & \multicolumn{1}{c}{I} & \multicolumn{1}{c}{Temp} & \multicolumn{1}{c}{V\_avg} & \multicolumn{1}{c}{I\_avg} \\ \cmidrule{1-5} \cmidrule{7-11}
$0.978$ & $0.754$ & $0.921$ & $0.978$ &$0.755$ & \phantom{0} &$0.330$ &$0.751$ &$0.916$ &$0.329$ &$0.751$ \\
$0.978$ & $0.756$ & $0.918$ & $0.978$ &$0.759$ & &$0.978$ &$0.839$ &$0.947$ &$0.951$ &$0.868$ \\
$0.386$ & $0.751$ & $0.492$ & $0.385$ &$0.751$ & &$0.493$ &$0.751$ &$0.924$ &$0.493$ &$0.751$ \\
$0.978$ & $0.759$ & $0.921$ & $0.978$ &$0.765$ & &$0.411$ &$0.751$ &$0.912$ &$0.410$ &$0.751$ \\ \midrule

\\ \midrule
\multicolumn{5}{c}{User (2,1) $X^{(2)}_1$ } & & \multicolumn{5}{c}{User (2,2) ${X}^{(2)}_2$} \\ \cmidrule{1-5} \cmidrule{7-11}
\multicolumn{1}{c}{V} & \multicolumn{1}{c}{I} & \multicolumn{1}{c}{Temp} & \multicolumn{1}{c}{V\_avg} & \multicolumn{1}{c}{I\_avg} & & \multicolumn{1}{c}{V} & \multicolumn{1}{c}{I} & \multicolumn{1}{c}{Temp} & \multicolumn{1}{c}{V\_avg} & \multicolumn{1}{c}{I\_avg} \\ \cmidrule{1-5} \cmidrule{7-11}
$0.495$ &$0.751$ &$0.919$ &$0.495$ &$0.751$ & &$0.655$ &$0.656$ &$0.040$ &$0.700$ &$0.669$ \\ 
$0.299$ &$0.751$ &$0.918$ &$0.279$ &$0.751$ & &$0.600$ &$0.679$ &$0.026$ &$0.605$ &$0.705$ \\
$0.427$ &$0.663$ &$0.040$ &$0.570$ &$0.707$ & &$0.312$ &$0.751$ &$0.915$ &$0.312$ &$0.751$ \\
$0.487$ &$0.769$ &$0.955$ &$0.433$ &$0.643$ & &$0.314$ &$0.751$ &$0.918$ &$0.313$ &$0.751$ \\

\bottomrule
\end{tabular}
\end{table*}
\begin{table*}[p]
\centering
\caption{A part of intermediate and collaboration representations for users.}
\label{table:data}
\begin{tabular}{rrrrrrrrr}
\toprule
\multicolumn{4}{c}{User (1,1) $\widetilde{X}^{(1)}_1$ } & & \multicolumn{4}{c}{User (1,1) $\widehat{X}^{(1)}_1$} \\ \cmidrule{1-4} \cmidrule{6-9}
$0.863$ &$0.724$ &$-0.978$	&$1.286$ & &$0.896$ &$-0.282$	&$0.277$	&$-1.709$ \\
$0.862$ &$0.723$ &$-0.977$	&$1.290$ & &$0.893$ &$-0.283$	&$0.280$	&$-1.711$ \\
$0.237$ &$0.579$ &$-0.325$	&$1.073$ & &$0.359$ &$-0.060$	&$0.627$	&$-1.058$ \\
$0.862$ &$0.726$ &$-0.975$	&$1.297$ & &$0.894$ &$-0.283$	&$0.286$	&$-1.715$ \\ \midrule

\\ \midrule
\multicolumn{4}{c}{User (1,2) $\widetilde{X}^{(1)}_2$ } & & \multicolumn{4}{c}{User (1,2) $\widehat{X}^{(1)}_2$} \\ \cmidrule{1-4} \cmidrule{6-9}
$0.366$ & $-1.069$ & $-0.486$ &	$0.809$ && $0.759$ &$-0.020$ &	$0.771$ &$-0.996$ \\
$0.040$ & $-1.072$ & $-0.553$ &	$1.656$ && $0.892$ &$-0.278$ &	$0.405$ &$-1.777$ \\
$0.277$ & $-1.054$ & $-0.470$ &	$1.021$ && $0.800$ &$-0.091$ &	$0.648$ &$-1.174$ \\
$0.319$ & $-1.056$ & $-0.481$ &	$0.912$ && $0.772$ &$-0.055$ &	$0.709$ &$-1.084$ \\ \midrule

\\ \midrule
\multicolumn{4}{c}{User (2,1) $\widetilde{X}^{(2)}_1$ } & & \multicolumn{4}{c}{User (2,1) $\widehat{X}^{(2)}_1$} \\ \cmidrule{1-4} \cmidrule{6-9}
$-0.587$ &$-1.257$ &$-0.117$ &$-0.716$ &&$ 0.795$ &$-0.082$ &$0.645$ &$-1.179$ \\
$-0.425$ &$-1.297$ &$-0.176$ &$-0.485$ &&$ 0.752$ &$ 0.009$ &$0.801$ &$-0.955$ \\
$-0.994$ &$-0.589$ &$-0.035$ &$-0.300$ &&$-0.037$ &$-0.214$ &$0.374$ &$-1.105$ \\
$-0.491$ &$-1.228$ &$-0.197$ &$-0.733$ &&$ 0.833$ &$ 0.014$ &$0.615$ &$-1.120$ \\ \midrule

\\ \midrule
\multicolumn{4}{c}{User (2,2) $\widetilde{X}^{(2)}_2$ } & & \multicolumn{4}{c}{User (2,2) $\widehat{X}^{(2)}_2$} \\ \cmidrule{1-4} \cmidrule{6-9}
$-0.100$ &$-1.321$ &$ 0.145$ &$ 0.012$ &&$ 0.005$ &$-0.229$ &$0.203$ &$-1.297$ \\
$-0.001$ &$-1.277$ &$ 0.188$ &$ 0.006$ &&$-0.033$ &$-0.188$ &$0.290$ &$-1.244$ \\
$ 0.116$ &$-1.021$ &$ 0.586$ &$ 0.860$ &&$ 0.755$ &$-0.007$ &$0.783$ &$-0.978$ \\
$ 0.114$ &$-1.022$ &$ 0.587$ &$ 0.862$ &&$ 0.758$ &$-0.008$ &$0.782$ &$-0.979$ \\

\bottomrule
\end{tabular}
\end{table*}
\par
We first demonstrate the intermediate and collaboration representations.
A part of intermediate and collaboration representations are shown in Table~\ref{table:data}.
These results indicate that the intermediate and collaboration representations do not directly approximate the features of the raw data.
Focusing on the range of values in each column, we see that the intermediate representation varies widely from institution to institution, while the collaboration representation is generally consistent.
It is not possible to recover the raw data only from the intermediate and collaboration representations.
\par
Next, we evaluate recognition performance.
The convergence history of root mean squared error (RMSE) of FedDCL and other methods are shown in Figure~\ref{fig:battery_history}.
%Histories of {\bf FedAvg} and {\bf FedDCL} compute four epochs per round.
For {\bf FedAvg} and {\bf FedDCL}, the convergence history was plotted for each round, that is, every 4 epochs.
The experimental results show that {\bf FedDCL} has a higher convergence than {\bf FedAvg}.
This may be partly due to the fact that {\bf FedDCL} using dimensionality-reduced intermediate representation has fewer weight parameters than {\bf FedAvg}.
\par
In total, experimental results demonstrate that {\bf FedDCL} functions correctly as a privacy-preserving integrated analysis.
\begin{figure}[!t]
\center
\includegraphics[bb =  50 250 545 600, scale=0.5]{./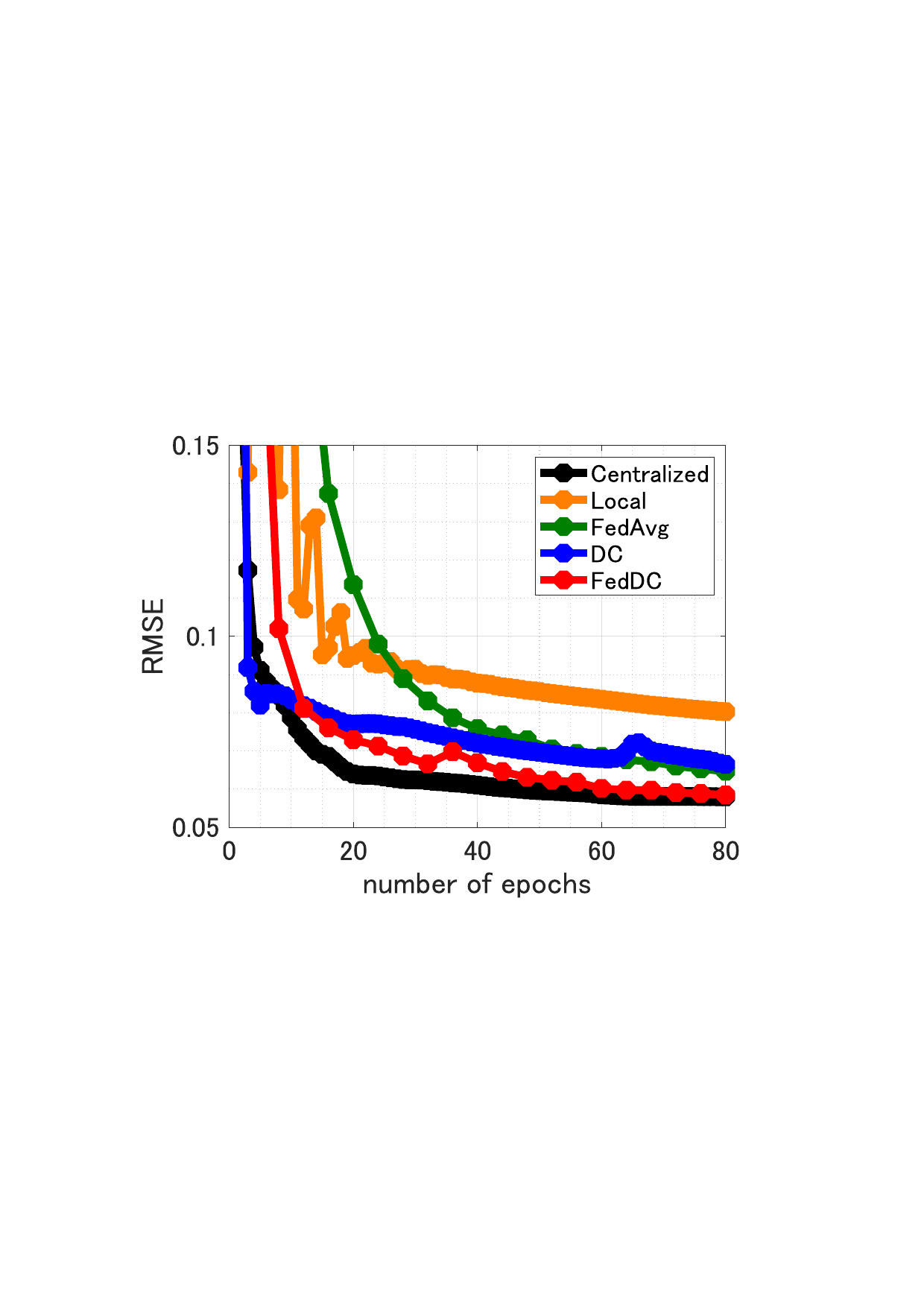}
\caption{Convergence history for {\sf BatterySmall}.
{\it Remark: {\bf FedDCL} has a higher convergence than {\bf FedAvg} and {\bf DC}.}}
\label{fig:battery_history}
\end{figure}
\subsection{Experiment II: prediction performance for six datasets}
We evaluate the prediction performance for six datasets.
\begin{itemize}
  \item {\sf BatterySmall} used in Experiment I.
  \item 
  {\sf CreditRating\_Historical} is a dataset contains five financial ratios, i.e., Working capital / Total Assets (WC\_TA), Retained Earnings / Total Assets (RE\_TA), Earnings Before Interests and Taxes / Total Assets (EBIT\_TA), Market Value of Equity / Book Value of Total Debt (MVE\_BVTD), and Sales / Total Assets (S\_TA), and industry sector labels from 1 to 12 for 3932 customers, obtained from the MATLAB Statistics and Machine Learning Toolbox.
  The dataset also includes credit ratings from ``AAA'' to ``CCC'' for all customers.
  We quantified each ``AAA'' to ``CCC'' from 6 to 0 and set up as regression problem.
  \item {\sf eICU} is a large critical care database gathered from multiple hospitals in the U.S. obtained from the eICU Collaborative Research Database \cite{pollard2018eicu}.
  We selected $24$ features: gender, age, apacheApsVar, intubated, vent, eyes, motor, verbal, urine, wbc, temperature, respiratoryrate, sodium, heartrate, meanbp, ph, hematocrit, creatinine, albumin, pao2, pco2, bun, glucose, bilirubin, and fio2.
  We set up as a regression problem for the number of days in the unit.
  \item {\sf HumanActivity} is a dataset for five human activities: sitting, standing, walking, running, and dancing, obtained from the MATLAB Statistics and Machine Learning Toolbox.
  We set up as a five class classification problem.
  \item {\sf MNIST} is a handwritten digit database \cite{lecun1998gradient}.
  It has $28 \times 28$ grayscale images with a label from 10 classes.
  We set up as a 10 class classification problem.
  \item {\sf Fashion-MNIST} is a dataset of Zalando's article images \cite{xiao2017fashion}.
  It has $28 \times 28$ grayscale images with a label from 10 classes.
  We set up as a 10 class classification problem.
\end{itemize}
We consider the situation where each dataset is held in 20 user institutions which also organize into 5 groups, that is, $d = 5$ and $c_i = 4$.
Other parameters are shown in Table~\ref{table:parameters}.
We evaluate the prediction performance: root mean squared error (RMSE) for {\sf BatterySmall}, {\sf CreditRating\_Historical}, and {\sf eICU} and Accuracy for {\sf HumanActivity}, {\sf MNIST}, and {\sf Fashion-MNIST}.
\begin{table}[t]
  \caption{
    Parameters for Experiment I.
  }
  \label{table:parameters}
\centering
%\footnotesize
\begin{tabular}{ccccc} \toprule
%  Dataset & $n_{ij}$ & number of & network & $\widetilde{m}_{ij}$ \\
%   &  & test samples &  \\ \midrule
%  %
%  {\sf BatterySmall} & 100 & 1000 & [5--20--1] & 4 \\
%  {\sf CreditRating\_Historical} & 100 & 1000 & $[17-50-1]$ & 15 \\
%  {\sf eICU} & 100 & 1000 & [24--10--1] & 15 \\
%  {\sf HumanActivity} & 100 & 1000 & [60--80--5] & 50 \\
%  {\sf MNIST} & 100 & 10000 & [784--500--100--10] & 50 \\
%  {\sf Fashion-MNIST} & 1000 & 10000 & [784--500--100--10] & 50 \\
Dataset & $n_{ij}$ & $m$ & $\widehat{m}=\widetilde{m}_{ij}$ & network layers \\ \midrule
{\sf BatterySmall} & 100 & 5 & 4 & [$\{m,\widehat{m}\}$--20--1]  \\
{\sf CreditRating\_Historical} & 100 & 17 & 15 & [$\{m,\widehat{m}\}$-50-1]  \\
{\sf eICU} & 100 & 24 & 15 & [$\{m,\widehat{m}\}$--10--1] \\
{\sf HumanActivity} & 100 & 60 & 50 & [$\{m,\widehat{m}\}$--80--5] \\
{\sf MNIST} & 100 & 784 & 50 & [$\{m,\widehat{m}\}$--500--100--10] \\
{\sf Fashion-MNIST} & 1000 & 784 & 50 & [$\{m,\widehat{m}\}$--500--100--10] \\
\bottomrule
\end{tabular}
\end{table}
\begin{figure}[!t]
\center
\subfloat[Batterysmall (RMSE)]{
\includegraphics[bb =  50 250 545 600, scale=0.35]{./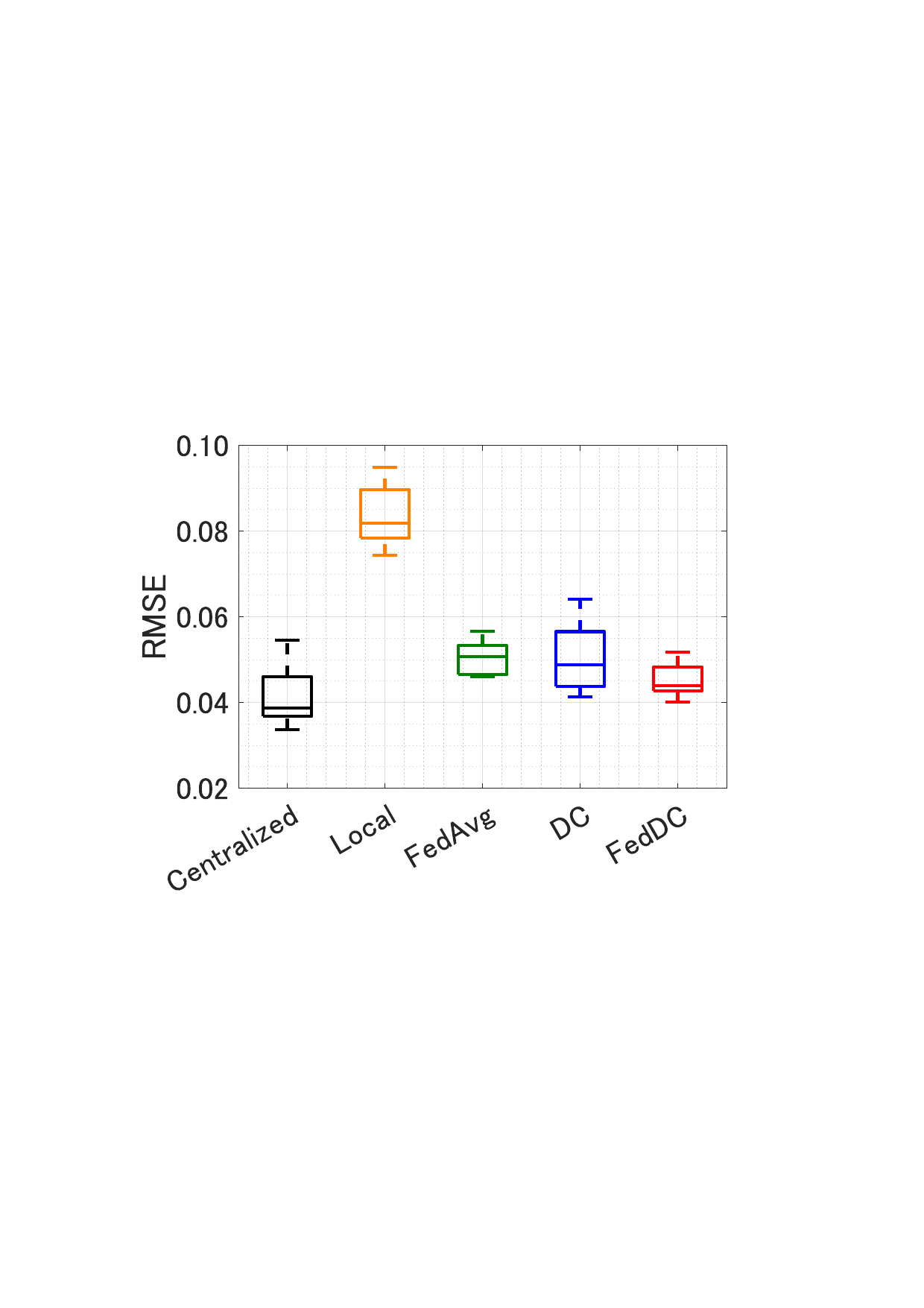}
}
\subfloat[CreditRating\_Historical (RMSE)]{
\includegraphics[bb =  50 250 545 600, scale=0.35]{./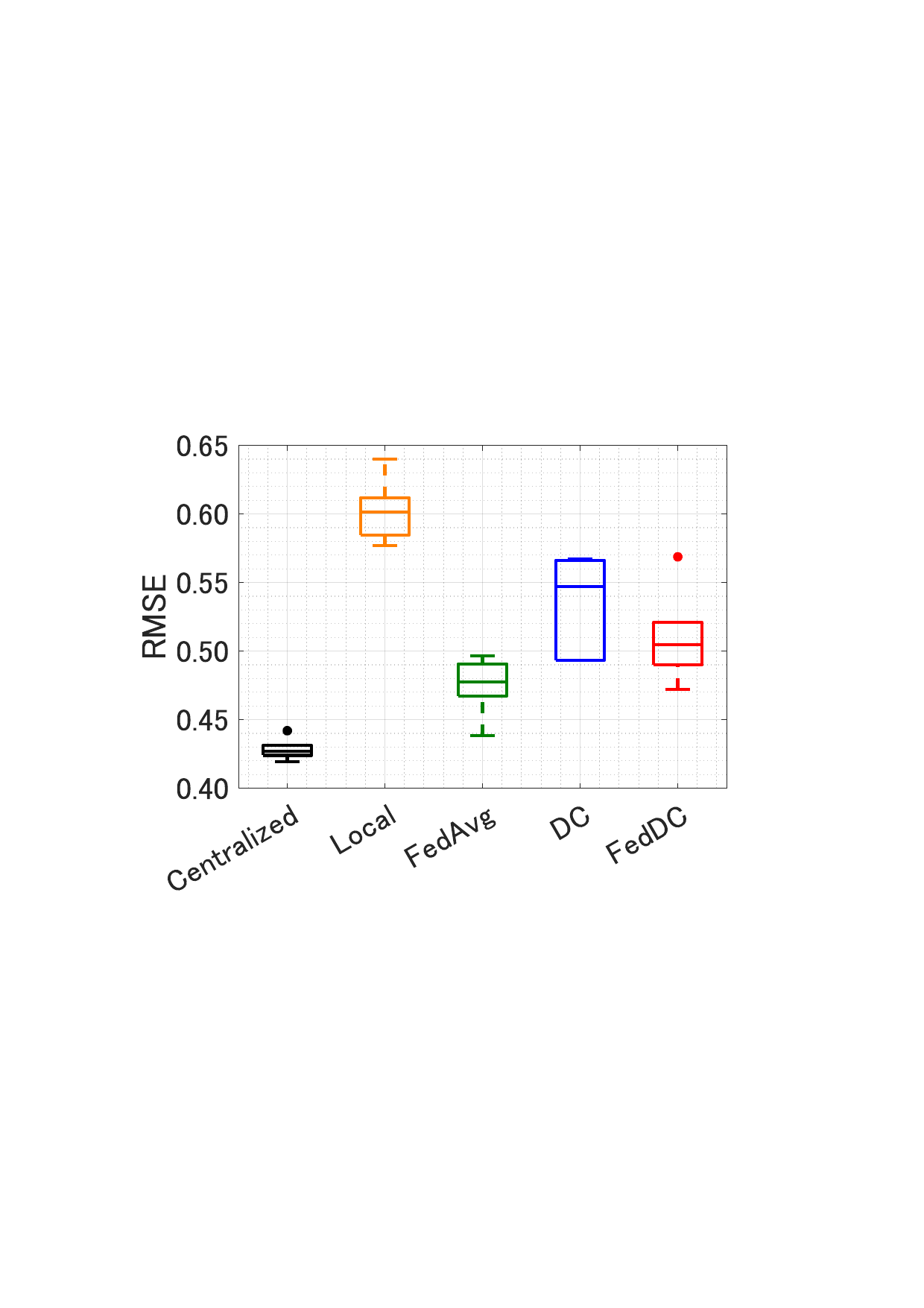}
} \\
\subfloat[eICU (RMSE)]{
\includegraphics[bb =  50 250 545 600, scale=0.35]{./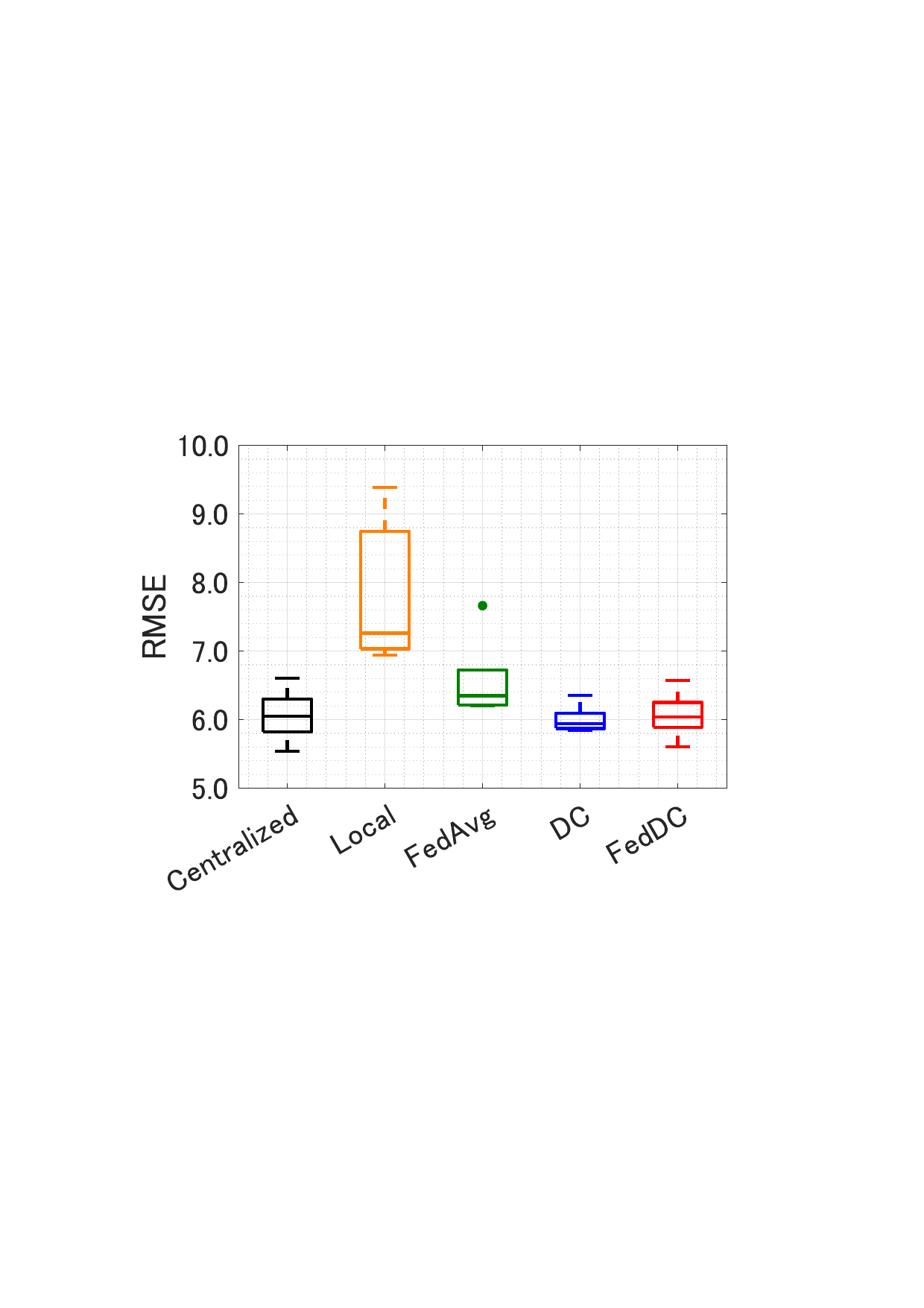}
}
\subfloat[HumanActivity (Accuracy)]{
\includegraphics[bb =  50 250 545 600, scale=0.35]{./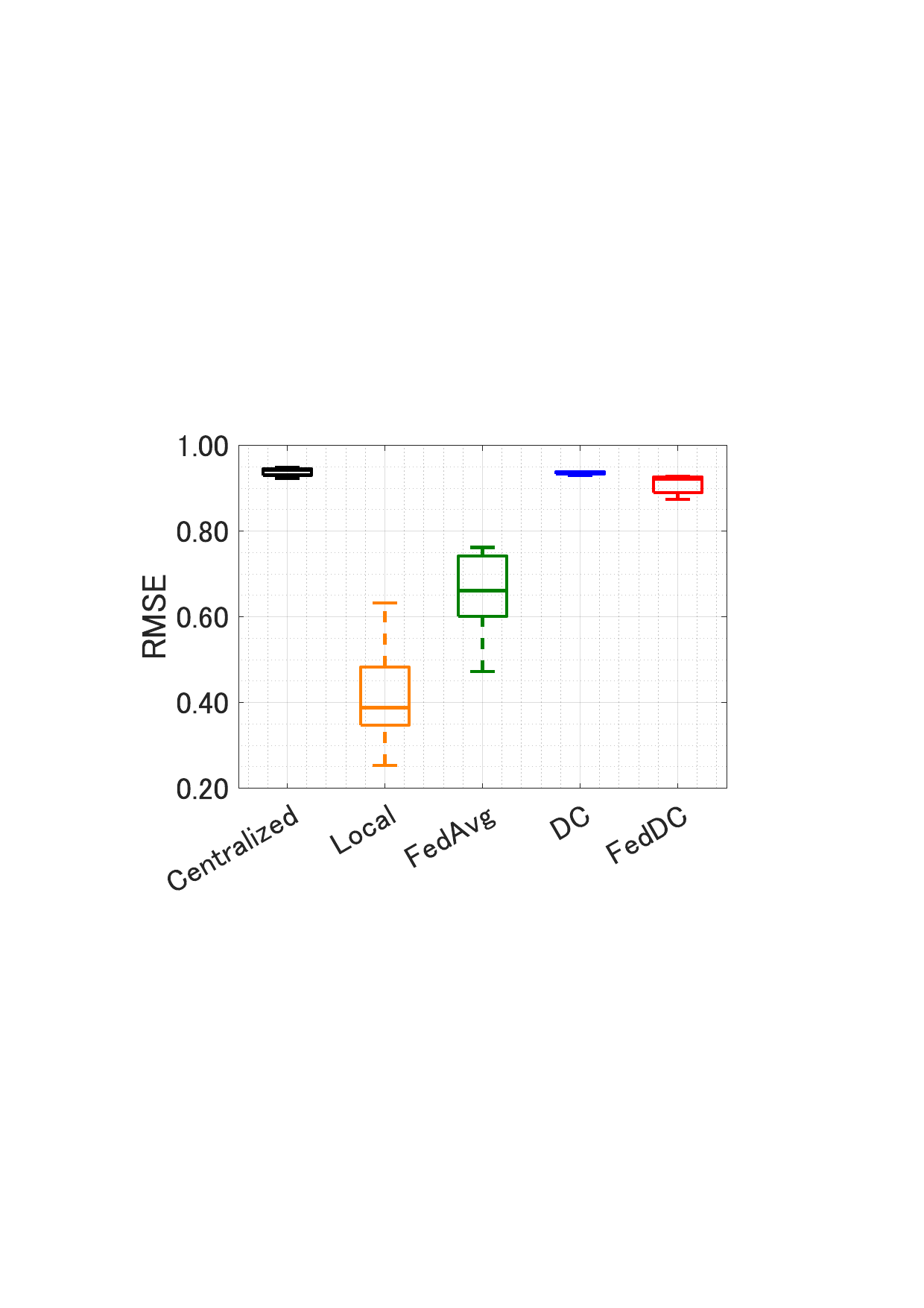}
} \\
\subfloat[MNIST (Accuracy)]{
\includegraphics[bb =  50 250 545 600, scale=0.35]{./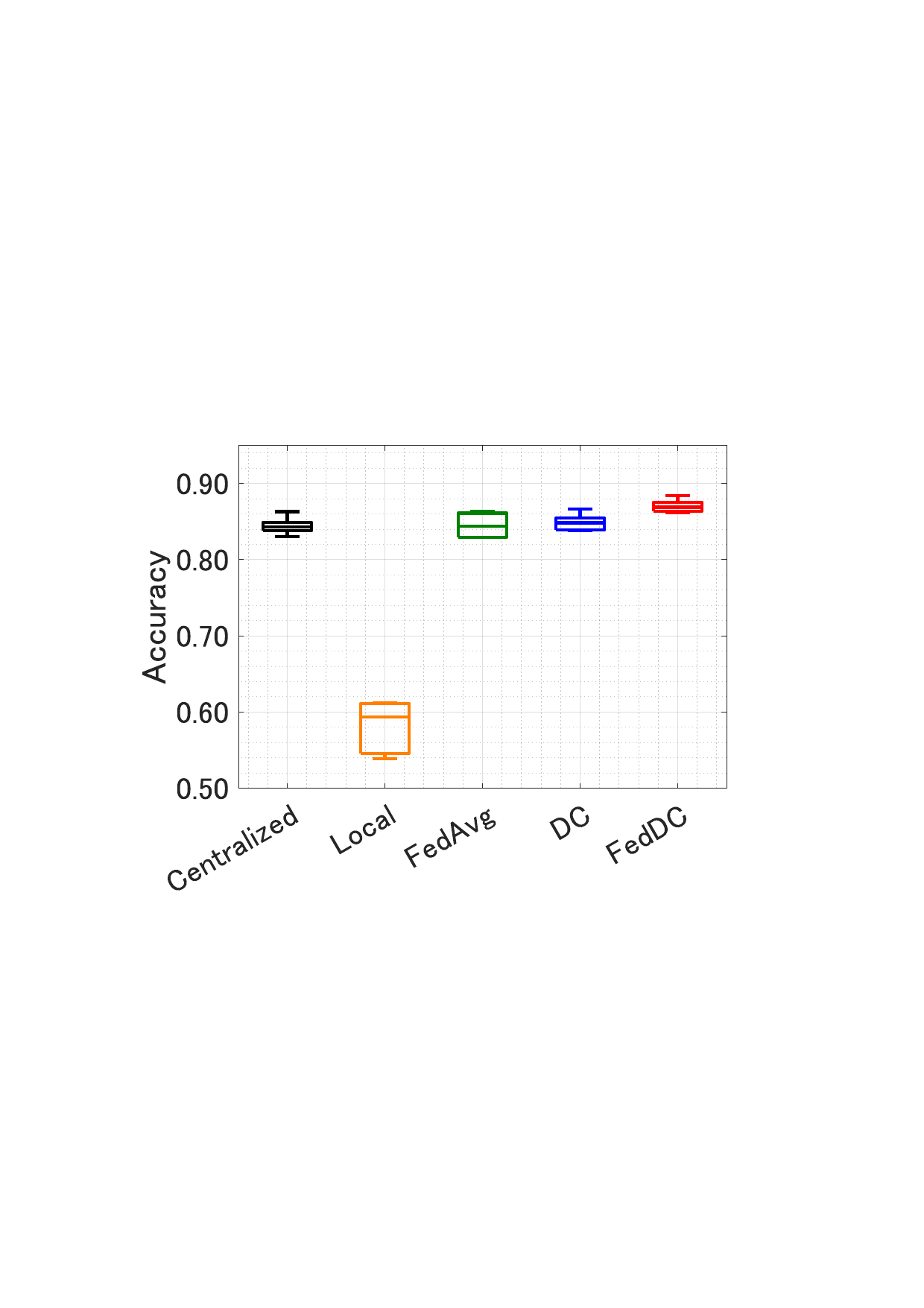}
}
\subfloat[Fashion-MNIST (Accuracy)]{
\includegraphics[bb =  50 250 545 600, scale=0.35]{./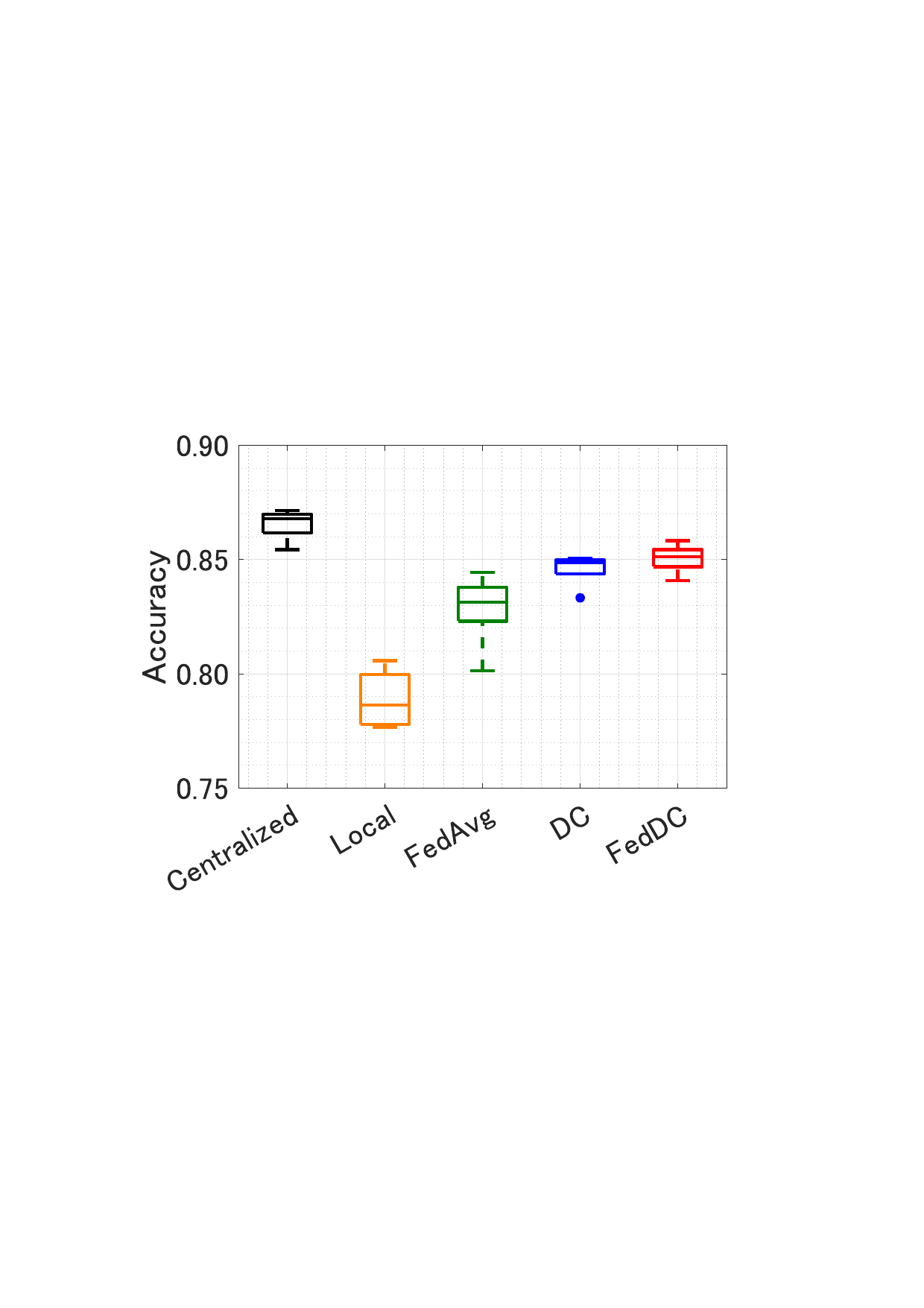}
}
\caption{Prediction performance. Note that lower RMSE and higher Accuracy mean better recognition performance.
{\it Remark: {\bf FedDCL} demonstrates very high recognition performance compared to {\bf Local} and comparable to {\bf FedAvg} and {\bf DC}.}}
\label{fig:ex1}
\end{figure}
\par
Numerical results are shown in Figure~\ref{fig:ex1}.
Note that lower RMSE and higher Accuracy mean better recognition performance.
Experimental results demonstrate that {\bf FedDCL} has very high recognition performance compared to {\bf Local} and comparable to {\bf FedAvg} and {\bf DC}.
\subsection{Experiment III: Performance improvement for increasing number of groups}
We evaluate the performance improvement when the number of groups is increased to $d=1, 2, \dots, 10$ with $c_i = 4$ for {\sf MNIST}.
Other parameters were set as in Experiment II.
\par
Numerical results are shown in Figure~\ref{fig:ex2}.
The results show that the accuracy of {\bf FedDCL} increases with increasing the number of groups as well as {\bf Centralized}, {\bf DC}, and {\bf DC}.
In addition, {\bf FedDCL} showed higher recognition performance than {\bf Centralized}.
This may be due to the higher number of epochs for federated learning.
Since {\bf FedAvg} generally has lower convergence than {\bf Centralized}, we set large total number of epochs for {\bf FedAvg} and {\bf FedDCL}.
More detailed analysis is a subject for future work.
\begin{figure}[!t]
\center
\includegraphics[bb =  50 250 545 600, scale=0.5]{./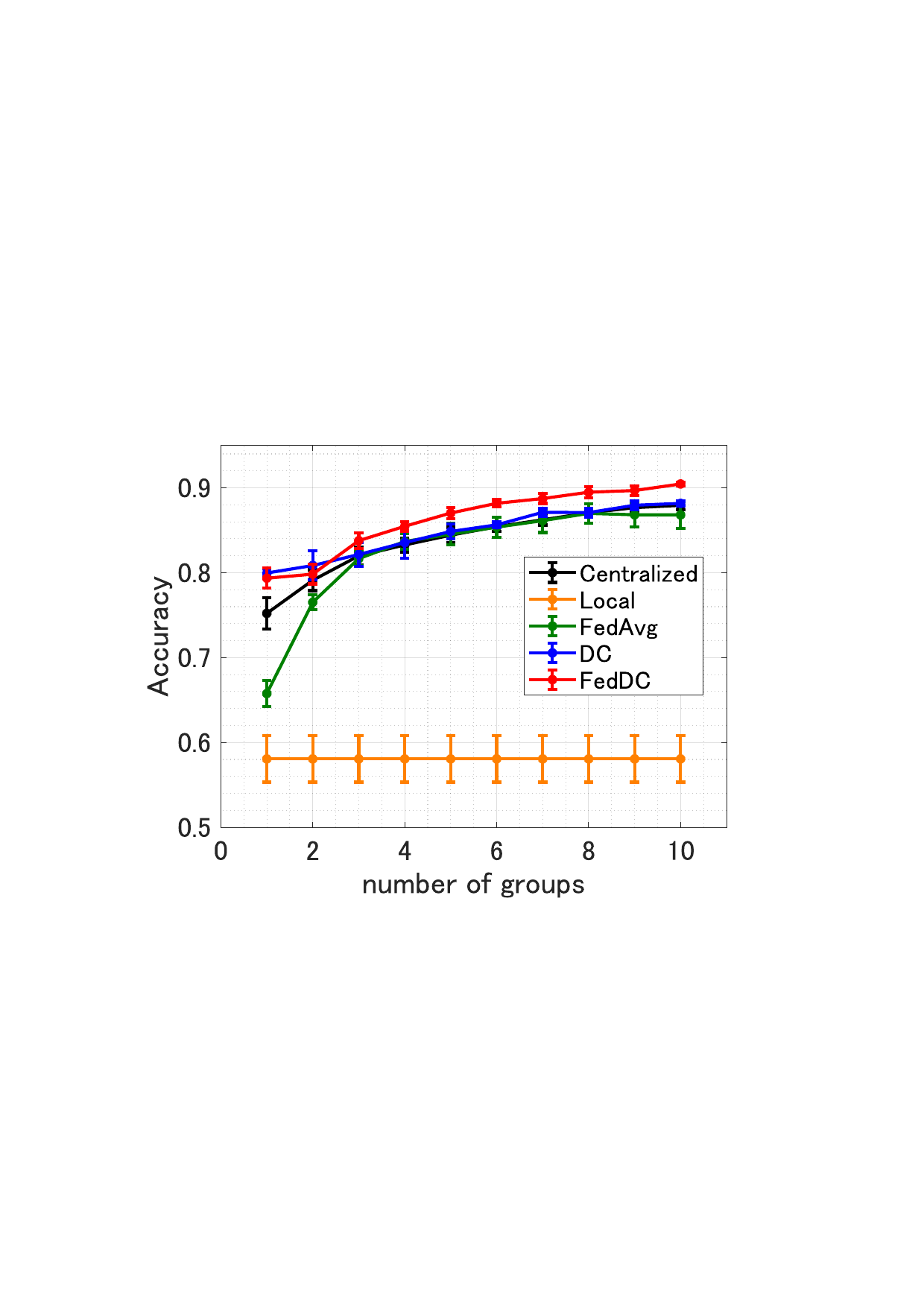}
\caption{Prediction performance vs number of groups.
{\it Remark: the accuracy of {\bf FedDCL} increases with increasing the number of groups as well as {\bf Centralized}, {\bf DC}, and {\bf DC}.}}
\label{fig:ex2}
\end{figure}
% %
\section{Conclusions}
In recent years, there has been a growing need for privacy-preserving integrated analysis for medical data held by multiple institutions.
Medical data may be stored on a server isolated from external networks, making continuous communication with the outside extremely difficult.
Therefore, developments of technologies for privacy-preserving integrated analysis without any iterative communications by user institutions are essential.
\par
In this study, we propose the FedDCL framework, which solves such communication issues by combining federated learning with the data collaboration analysis; see Figure~\ref{fig:abstract}.
FedDCL is the first method to combine federated learning and data collaboration analysis.
FedDCL has experimentally shown comparable analysis performance to existing federated learning and data collaboration analysis.
\par
FedDCL is a framework that can be easily combined with the latest federated learning mechanisms and deployed on a variety of data and tasks.
Therefore, FedDCL could become a breakthrough technology for future privacy-preserving analyses on multiple institutions in situations where continuous communication with the outside world is extremely difficult.
\par
Performance evaluations for parameter dependency and for non-IID distributed data have been done separately for federated learning and data collaboration analysis.
A similar evaluation for FedDCL is a future task.
In the future, we will develop the method and software.

\section*{Acknowledgements}
%The authors would like to thank the anonymous reviewers for their constructive comments.
This work was supported in part by the Japan Society for the Promotion of Science (JSPS), Grants-in-Aid for Scientific Research (Nos. JP22K19767, JP23H00462, JP23K21673, JP23K22166, JP23K28101, JP24K00535).

\bibliography{mybibfile}
\bibliographystyle{elsart-num-sort}

\end{document}